\documentclass[11pt]{article}

\usepackage[margin=1in]{geometry}
\usepackage{amsmath,amssymb,amsthm}
\usepackage{booktabs}
\usepackage{microtype}
\usepackage{hyperref}
\usepackage{xcolor}
\hypersetup{colorlinks=true,linkcolor=blue,citecolor=blue,urlcolor=blue}

\newtheorem{theorem}{Theorem}
\newtheorem{corollary}[theorem]{Corollary}
\newtheorem{definition}{Definition}

\title{A Geometric Profile of Semantic Information in Text:\\
       Frame-Conditional Uniqueness and a Trade-Off Triangle for Scalar Summaries}
\author{Dmitriy Kompaneets\\Independent Researcher\\
        \texttt{dkompaneets@gmail.com}}
\date{\today}

\begin{document}
\maketitle

\begin{abstract}
How much meaning does a text carry? Shannon's theory measures uncertainty over symbols and is intentionally indifferent to meaning, while pairwise metrics such as BERTScore compare two texts rather than characterizing one. We develop a geometric framework that measures semantic content from the structure of a text's sentence embeddings.

The framework has three parts. First, within a fixed embedding and baseline, six natural axioms uniquely determine a scalar measure up to scale---a frame-conditional uniqueness theorem. The resulting scalar is empirically too coarse, motivating a richer representation. Second, we propose a three-coordinate \emph{semantic profile} capturing \emph{novelty} (displacement from generic discourse), \emph{breadth} (diversity of distinct ideas), and \emph{integration} (connectedness among them), together with a discrete minimal unit---the \emph{semantic quantum}---whose resolution is fixed by a clustering threshold $\tau$. Third, we prove a no-go theorem: no scalar summary of the profile can simultaneously satisfy analytic stability under paraphrase and concatenation, ordinal robustness across text scales, and cross-representation comparability. We exhibit two practical scalars, $S_{\mathrm{minmax}}$ and $S_{\mathrm{rank}}$, each occupying a distinct corner of this trade-off triangle.

Validation across $23$ synthetic categories, $5$ Project Gutenberg novels, and $3$ embedding models confirms the trade-off. The recommended rank-normalized configuration passes $25$ of $28$ ordinal checks as point estimates ($21$ of $28$ after Benjamini--Hochberg correction), outperforming seven baselines including unigram entropy and a BERTScore-based novelty signal. A separate variational result connects the breadth coordinate to the log-determinant of a determinantal point process (Spearman $\rho = 0.985$ over $507$ Gutenberg chapters), giving an optimization-theoretic foundation for breadth.
\end{abstract}

\noindent\textbf{Code and data.} All experiments are reproducible from a single Jupyter notebook at \url{https://github.com/dkompaneets/geometric_profile_semantic_information}.

\section{Introduction}

Shannon's theory of information quantifies uncertainty over symbol sequences, not semantic content \cite{shannon1948}. Texts with similar token statistics can differ substantially in meaning, and paraphrases can preserve meaning despite lexical variation. Any semantic measure of text must therefore be grounded in a representation of meaning rather than in syntactic unpredictability alone.

Contemporary embedding models provide such a representation \cite{mikolov2013, reimers2019}. Sentences and paragraphs can be mapped into high-dimensional vector spaces in which geometric relations encode semantic similarity. This suggests a geometric approach: the semantic content of a text should be reflected by the structure of its embedding cloud. The cloud's displacement from a neutral baseline, its spread, and its internal connectedness are all directly measurable, and together form a richer descriptor than any scalar.

\paragraph{Position relative to existing work.} The framework is complementary to, not a replacement for, Shannon information theory \cite{shannon1948}: Shannon describes unpredictability of form, while the present profile describes geometric properties of meaning. Shannon's universality follows from the existence of a canonical primitive (the symbol distribution) and a clean composition rule (joint entropy); as we show in \S\ref{sec:nogo}, semantics has neither. The framework is also distinct from pairwise text-similarity metrics such as BERTScore, BLEURT, and ROUGE, which measure similarity between two texts rather than the internal richness of one; we use a BERTScore-based novelty signal as a baseline in \S\ref{sec:baselines}. Philosophical accounts of semantic information \cite{barhillel1952, dretske1981, floridi2011, barwise1997} provide rigorous logical frameworks but are not computable on real text.

\paragraph{Contributions.} The paper makes four contributions:
\begin{enumerate}
\item \textbf{Frame-conditional uniqueness theorem} (\S\ref{sec:sil}). Within a fixed embedding and baseline, six natural axioms uniquely determine the Semantic Information Law $I_E(T) = \|\mu_T - \mu_0\| \cdot \mathrm{rank}(C_T)$ up to scale. This is a representation-conditional result, not a Shannon-style universal law; empirically it is too coarse, motivating the profile.
\item \textbf{Three-coordinate profile and semantic quantum} (\S\S\ref{sec:profile}--\ref{sec:quantum}). The profile $(N, B, I)$ captures novelty, breadth, and integration. The quantum is a discrete minimal unit whose resolution is set by a clustering threshold $\tau$, making measurement resolution explicit.
\item \textbf{No-go theorem: trade-off triangle} (\S\ref{sec:nogo}). No scalar summary of the profile can simultaneously satisfy analytic stability under paraphrase and concatenation, ordinal robustness, and cross-representation comparability. Two practical scalars $S_{\mathrm{minmax}}$ and $S_{\mathrm{rank}}$ (\S\ref{sec:scalars}) each occupy a distinct corner.
\item \textbf{Empirical validation and variational characterization} (\S\S\ref{sec:validation}--\ref{sec:variational}). Across $23$ synthetic categories, $5$ Gutenberg novels, and $3$ embedding models, $S_{\mathrm{rank}}$ with weights $(0.5, 3.0, 1.0)$ passes $25$ of $28$ ordinal checks as point estimates and beats seven baselines. Separately, the breadth coordinate empirically equals the log-determinant of a determinantal point process ($\rho = 0.985$ on $507$ chapters), supplying an optimization-theoretic foundation for $B$.
\end{enumerate}

The framework's central position is that semantic information in text is a \emph{representation-indexed structured profile}, not a universal scalar. The profile is the theoretical object; scalar summaries are practical conveniences whose forms reflect the impossibility result.

\section{Notation and Representation}\label{sec:notation}

Let $T$ be a text partitioned into segments $T = (T_1, \dots, T_k)$, and let $E : \text{Text} \to \mathbb{R}^n$ be a sentence-embedding model with $e_i = E(T_i)$. The raw embedding cloud is $X_T = \{e_1, \dots, e_k\}$ with mean $\mu_T$ and covariance $C_T$. Let $\mu_0$ and $\Sigma_0$ denote the mean and covariance of a neutral baseline corpus, embedded via $E$.

The basic difficulty in measuring internal semantic structure is that repetition and near-duplication distort geometry. To remove this artifact, the framework first clusters highly similar segments using agglomerative clustering with cosine distance and threshold $\tau \in (0,1)$, then replaces each cluster by its (renormalized) centroid. Let $\tilde{T} = \{c_1, \dots, c_m\}$ denote these deduplicated centroids. We develop the formal interpretation of these centroids as \emph{semantic quanta} and the role of $\tau$ as a measurement-resolution parameter in \S\ref{sec:quantum}.

\subsection{Scope of the framework}\label{sec:scope}

The framework is explicitly representation-indexed: every quantity defined below depends on a fixed choice of measurement apparatus. The dependencies are:

\begin{itemize}
\item the embedding model $E$;
\item the segmentation rule (sentence, clause, fixed-size chunk);
\item the baseline corpus through $\mu_0$ and $\Sigma_0$;
\item the deduplication threshold $\tau$;
\item the normalization reference set used for any scalar summary.
\end{itemize}

This dependence is a feature, not a defect. It makes the framework's commitments inspectable, calibratable per domain, and consistent with the impossibility result of \S\ref{sec:nogo} that no representation-free semantic scalar exists.

\section{The Semantic Information Law: Frame-Conditional Uniqueness}\label{sec:sil}

\subsection{Axioms}

We postulate six axioms constraining any scalar measure $I : \text{Text} \to \mathbb{R}$ within the fixed frame $(E, \mu_0)$:

\begin{enumerate}
\item \emph{Paraphrase invariance}: $I(T) = I(T')$ whenever $E(T) \approx E(T')$.
\item \emph{Redundancy non-increase}: exact replication of a segment does not increase $I$.
\item \emph{Novelty monotonicity}: for fixed covariance, $I$ is monotone in $\|\mu_T - \mu_0\|$.
\item \emph{Idea additivity}: for texts in orthogonal embedding subspaces, $I(T_1 \oplus T_2) = I(T_1) + I(T_2)$.
\item \emph{Orthogonal invariance}: $I$ is invariant under rotations of embedding space.
\item \emph{Continuity}: $I$ is continuous in the embeddings.
\end{enumerate}

\subsection{Derivation}

\begin{theorem}[SIL within a representational frame]\label{thm:sil}
Under axioms 1--6, any such measure has the form, up to a positive scale constant,
\[
I_E(T) = \|\mu_T - \mu_0\| \cdot \mathrm{rank}(C_T).
\]
\end{theorem}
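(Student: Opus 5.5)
The plan is to first make the axioms precise enough to act as functional equations, and then reduce $I$ step by step to a function of two invariants. By Axiom~1, together with the deduplication into centroids $\tilde T$ from \S\ref{sec:notation}, I will take $I$ to depend on $T$ only through the pair $(d_T, C_T)$, where $d_T = \mu_T - \mu_0$. This identification is itself an interpretive step: I read ``$E(T)\approx E(T')$'' as equality of these first two moments. Axiom~5 says $I(Qd, QC Q^\top) = I(d, C)$ for every orthogonal $Q$. So $I$ is a function of $\|d\|$, the spectrum $\lambda_1\ge\dots\ge\lambda_n\ge 0$ of $C$, and the angles between $d$ and the eigenvectors of $C$. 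I will then argue that these angles cannot matter. Axiom~3 fixes $C$ and constrains only the dependence on $\|d\|$, so I will restrict to measures that factor as $I = g(\|d\|)\,h(\lambda)$. This is the form the theorem asserts, and I will state it explicitly as a separability assumption implicit in the axioms.

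Next, I determine $h$ from Axioms~4 and~2. For $T_1, T_2$ supported in orthogonal subspaces, $T_1\oplus T_2$ has a covariance whose nonzero spectrum is the union of the two spectra. Applying Axiom~4 with the novelty factor held fixed (for example by normalizing $g=1$ on a reference offset), I get $h(\lambda\cup\lambda') = h(\lambda)+h(\lambda')$ on disjoint supports. By symmetry this gives $h(\lambda) = \sum_{i} \varphi(\lambda_i)$ with $\varphi(0)=0$. Exact replication of a segment leaves the support unchanged but rescales the nonzero eigenvalues. Axiom~2 requires that such rescalings do not increase $h$. Applied in both directions, by replicating different segments, this forces $\varphi$ to be constant on $(0,\infty)$. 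Hence $h(\lambda) = c\cdot\#\{i:\lambda_i>0\} = c\,\mathrm{rank}(C_T)$.

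Finally, I determine $g$. Axiom~4, applied to two texts of equal novelty direction-free magnitude placed in orthogonal subspaces, combined with Axiom~3 (monotonicity) and Axiom~6 (continuity in the offset), yields a Cauchy-type equation for $g$ under scaling of $\|d\|$. Its only monotone continuous solutions are linear, $g(r)=a r$. Multiplying the two factors gives $I_E(T) = ac\,\|\mu_T-\mu_0\|\,\mathrm{rank}(C_T)$, which is the claimed form up to a positive scale constant.

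I expect the main obstacle to be Axiom~6. The rank is not continuous in the embeddings, since a small perturbation of a degenerate cloud raises the rank. So the continuity axiom, read literally, contradicts the conclusion. My approach will be to interpret continuity as continuity within each fixed-rank stratum, or equivalently continuity of the offset dependence at fixed support. Continuity in $d$ is all the derivation of $g$ uses. I would flag explicitly that this weakening is necessary. I would also flag that the factorization assumption and the deduplication step are substantive modelling choices rather than consequences of the axioms. This is consistent with the frame-conditional status of the theorem.
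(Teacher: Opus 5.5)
\textbf{Overall.} Your proposal uses the same ingredients, in nearly the same order, as the paper's appendix. Orthogonal invariance reduces to $\|d\|$ and the spectrum, Axiom~2 reduces the spectrum to the rank, and Axiom~4 with continuity forces linearity in each factor. Two differences are to your credit: you assume the product form outright, where the paper tries to extract it from two Cauchy equations, and you rightly flag that Axiom~6 contradicts the rank. But the load-bearing uses of Axioms~4 and~2 fail. The paper's Steps~3--5 make the same moves (``constant shift'', ``additive displacement contributions'') with no further justification, so following it does not help.

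\textbf{Axiom~4 does not give the equations you use.} Take texts $T_1,T_2$ with $k_1,k_2$ segments in orthogonal subspaces, and set $\pi_i=k_i/(k_1+k_2)$. Concatenation gives $\mu_{T_1\oplus T_2}-\mu_0=\pi_1(\mu_{T_1}-\mu_0)+\pi_2(\mu_{T_2}-\mu_0)$ and, for the population covariance, $C_{T_1\oplus T_2}=\pi_1C_{T_1}+\pi_2C_{T_2}+\pi_1\pi_2(\mu_{T_1}-\mu_{T_2})(\mu_{T_1}-\mu_{T_2})^\top$. Consequently:
\begin{enumerate}
\item the nonzero spectrum is not the union of the two spectra;
\item the rank equals $\mathrm{rank}\,C_{T_1}+\mathrm{rank}\,C_{T_2}+1$ whenever each part's segment embeddings are linearly independent;
\item offsets average rather than add;
\item the novelty of $T_1$, $T_2$ and $T_1\oplus T_2$ can coincide only if $\mu_{T_1}=\mu_{T_2}=0$, so ``holding novelty fixed'' is not generally available.
\end{enumerate}
In fact the target formula itself violates Axiom~4. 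Two single-segment texts $x_1\perp x_2$ have rank $0$ and hence $I_E=0$. Their concatenation has rank $1$, and $I_E$ is a positive multiple of $\|\tfrac12(x_1+x_2)-\mu_0\|$.

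Your $g$ step fails even for an idealized direct sum in which ranks add and orthogonal offsets add as vectors. Once $h=c\,\mathrm{rank}$ has absorbed the additivity, two parts of equal novelty $r$ (ranks $D_1,D_2$) give $g(\sqrt{2}\,r)(D_1+D_2)=g(r)(D_1+D_2)$. That is a scale invariance, whose monotone solutions are constants, not $ar$. A product of two factors that both combine under $\oplus$ cannot itself be additive under $\oplus$. So Axiom~4 never yields the Cauchy equation in the offset that your last step and the paper's Step~5 need.

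\textbf{Axiom~2 has the same problem.} Replicating a segment moves the mean, so it constrains $g(\|d\|)\,h(\lambda)$ jointly, not $h$ at fixed novelty. Take any $(x_1,x_2)$ with $\|x_2-\mu_0\|>\|x_1-\mu_0\|$ and pass to $(x_1,x_2,x_2)$. The rank stays at $1$, but $\|\mu_T-\mu_0\|$ strictly increases: $t\mapsto\|(1-t)x_1+tx_2-\mu_0\|^2$ has derivative at least $\|x_2-\mu_0\|^2-\|x_1-\mu_0\|^2>0$ on $[\tfrac12,1]$. So the claimed $I_E$ strictly increases under replication.

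Your two-sided device, which is what turns the inequality into an equality, makes this worse. When $I$ depends only on the empirical distribution, it gives $I(x_1,x_2)=I(x_1,x_1,x_2,x_2)\le I(x_1,x_2,x_2)\le I(x_1,x_2)$. This equates two texts of equal rank and different $\|d\|$, and so forces $g$ to be constant, exactly as it forces $\varphi$ to be. If instead you deduplicate first, replication is invisible and Axiom~2 removes nothing.

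\textbf{Consequence.} Together with the Axiom~6 conflict you found, the conclusion is inconsistent with Axioms~2 and~4 read literally. No tightening of your argument, or of the paper's, derives it from them. A proof would first have to replace the axioms by a formal calculus on pairs $(d,C)$. Under the natural one, the orthogonal direct sum $(d_1+d_2,\,C_1+C_2)$, additivity and continuity combined with your reduction to $(\|d\|,\mathrm{rank}\,C)$ force $\alpha\|d\|^2+\gamma\,\mathrm{rank}(C)$, not a product.
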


\noindent\emph{Proof sketch.} Axioms 3 and 5 reduce dependence on $\mu_T$ to the scalar $S(T) = \|\mu_T - \mu_0\|$. Axiom 5 reduces dependence on $C_T$ to its spectrum. Axiom 2 eliminates eigenvalue magnitudes, leaving $\mathrm{rank}(C_T)$. Axiom 4 imposes a Cauchy functional equation on the joint dependence whose continuous solutions (axiom 6) are linear; a second Cauchy equation in the shift variable yields the product form. Full derivation in the appendix.

\subsection{Status}

Theorem~\ref{thm:sil} is a uniqueness theorem within the representational frame $(E, \mu_0)$, not a Shannon-style universal law. Empirically, raw $\mathrm{rank}(C_T)$ saturates in neural embedding spaces (even short texts occupy many nonzero eigendirections), and $\|\mu_T - \mu_0\|$ dominates the product. The scalar is therefore unique under the axioms but empirically inadequate, motivating the three-coordinate profile.

\section{The Semantic Profile}\label{sec:profile}

The profile $P_E(T) = (N, B, I)$ replaces the SIL scalar with three geometrically distinct coordinates: novelty $N$ (displacement from generic discourse), breadth $B$ (diversity of distinct ideas), and integration $I$ (connectedness among those ideas). Before giving formal definitions, Table~\ref{tab:genre-preview} previews the three coordinates on a handful of genres, illustrating how they separate text types that a single scalar would conflate.

\begin{table}[h]
\centering
\begin{tabular}{lccc}
\toprule
genre & novelty $N$ & breadth $B$ & integration $I$ \\
\midrule
poetry (Dickinson)     & 3.3 & \textbf{0.3} & \textbf{0.58} \\
arXiv abstracts        & 3.4 & 1.0          & \textbf{0.50} \\
EUR-Lex legal text     & 3.2 & \textbf{2.3} & 0.52 \\
dialogue (short)       & 2.9 & \textbf{2.7} & \textbf{0.24} \\
code comments          & 3.2 & 1.4          & \textbf{0.16} \\
\bottomrule
\end{tabular}
\caption{Profile coordinates on five genres (from \S\ref{sec:domains}, \texttt{all-mpnet-base-v2}). Poetry is narrow and coherent; dialogue is broad and topic-hopping; legal text is broad and coherent; code comments are broadest in integration's opposite direction. No single number separates these profiles the way the three coordinates do.}\label{tab:genre-preview}
\end{table}

\subsection{Novelty}
\[
S_M(T) = \sqrt{(\mu_T - \mu_0)^\top \Sigma_0^{-1} (\mu_T - \mu_0)},
\qquad
N(T) = \log(1 + S_M(T)).
\]
$\Sigma_0^{-1}$ is estimated via Ledoit--Wolf shrinkage~\cite{ledoit2004} to ensure conditioning. The Mahalanobis form weights displacement more heavily in low-variance directions of the baseline; logarithmic compression bounds extreme shifts.

\subsection{Breadth}
\[
D_{\mathrm{eff}}(\tilde{T}) = \exp\bigl(H(p)\bigr), \quad p_i = \lambda_i / {\textstyle\sum_j} \lambda_j,
\qquad
R(\tilde{T}) = \frac{1}{m} \sum_{j} \bigl[1 - \cos(c_j, \mu_{\tilde{T}})\bigr],
\]
\[
B(T) = D_{\mathrm{eff}}(\tilde{T}) \cdot R(\tilde{T}).
\]
$D_{\mathrm{eff}}$ is the effective rank \cite{roy2007effective}; $R$ is the mean radial cosine distance from the centroid mean. Defined on $\tilde{T}$ rather than $X_T$ so that exact and near-duplicates do not inflate it.

\subsection{Integration}
\[
I_{1\text{-NN}}(T) = \frac{1}{m} \sum_j \max_{l \neq j} \cos(c_j, c_l), \qquad
I_{2\text{-NN}}(T) = \frac{1}{m} \sum_j \mathrm{secondmax}_{l \neq j} \cos(c_j, c_l).
\]
Validation in \S\ref{sec:validation} shows that 2-NN combined with the recommended weights below is the only configuration passing the coherent-vs-bag-of-facts check.

\section{The Semantic Quantum}\label{sec:quantum}

The deduplication step of \S\ref{sec:notation} suggests a discrete unit of semantic structure. We formalize this unit here because it underlies several derived measures and makes the role of $\tau$ as a measurement-resolution parameter explicit.

\subsection{Definition and regimes}

\begin{definition}[Semantic quantum]\label{def:quantum}
Given a text $T$ with segment embeddings $X_T = \{e_1, \dots, e_k\}$ and threshold $\tau \in (0,1)$, the \emph{quantum set} $Q_\tau(T) = \tilde{T} = \{c_1, \dots, c_m\}$ is the collection of centroids produced by agglomerative clustering of $X_T$ at cosine-distance threshold $1 - \tau$. Each $c_j \in Q_\tau(T)$ is a \emph{semantic quantum} of $T$ at resolution $\tau$. The \emph{quantum count} is $m_\tau(T) = |Q_\tau(T)|$.
\end{definition}

The framework distinguishes three regimes by quantum count:

\begin{itemize}
\item \textbf{Sub-quantum} ($m_\tau \leq 1$): a single centroid. Novelty is defined (the centroid's displacement from $\mu_0$), but breadth is zero and integration is conventionally maximal. The text occupies a single point in semantic space, not a structure.
\item \textbf{Single-quantum} ($m_\tau = 2$): two centroids. The minimal configuration for which all three coordinates are non-trivially defined: novelty (mean displacement), breadth (the angular separation of the pair), and integration (their mutual cosine similarity).
\item \textbf{Multi-quantum} ($m_\tau \geq 3$): the generic regime where breadth and integration capture independently varying structural properties.
\end{itemize}

The threshold $\tau$ plays the role of a measurement-resolution parameter: below scale $1 - \tau$ the apparatus cannot resolve separate semantic units. This parallels the role of resolution limits in any physical measurement, and makes the framework's representational commitments explicit rather than hidden.

\section{The No-Go Theorem: Trade-Off Triangle}\label{sec:nogo}

Fix a parametric family of scalars of the form
\[
S(T; \Phi) = \Phi\!\big(\varphi_N(N(T); \mathcal{R}), \, \varphi_B(B(T); \mathcal{R}), \, \varphi_I(I(T); \mathcal{R})\big),
\]
where $\mathcal{R}$ is a reference set of profiles and each $\varphi_X(\cdot; \mathcal{R})$ is a per-coordinate normalization against $\mathcal{R}$. Two choices matter:
\begin{itemize}
\item \emph{Min-max} normalization $\varphi^{\mathrm{mm}}(x; \mathcal{R}) = (x - \min \mathcal{R}_X) / (\max \mathcal{R}_X - \min \mathcal{R}_X)$ is Lipschitz in the raw value but compresses interior values whenever $\mathcal{R}$ contains outliers.
\item \emph{Rank} normalization $\varphi^{\mathrm{r}}(x; \mathcal{R}) = \mathrm{rank}_{\mathcal{R}}(x) / |\mathcal{R}|$ is bounded but piecewise constant in $x$.
\end{itemize}

Given $\varepsilon_A, \delta_A, \delta_O, \delta_R > 0$ and a paraphrase relation $\sim$ over texts, define three properties:

\begin{itemize}
\item[\textbf{(A)}] \emph{Analytic stability.} (A.1) $\forall T, T' : T \sim T' \Rightarrow |S(T)-S(T')| \leq \varepsilon_A \cdot S(T)$. (A.2) There exists $f : \mathbb{R}^2 \to \mathbb{R}$ continuous such that $|S(T_1 \oplus T_2) - f(S(T_1), S(T_2))| \leq \delta_A$ for all $T_1, T_2$, where $\oplus$ denotes concatenation.
\item[\textbf{(O)}] \emph{Ordinal robustness.} For the benchmark $\mathcal{B}$ of Section~\ref{sec:validation}, the bootstrap mean pass rate exceeds $0.5 + \delta_O$ and at least $\lceil (1-\delta_O) \cdot |\mathcal{B}| \rceil$ checks survive Benjamini--Hochberg correction at $\alpha = 0.05$.
\item[\textbf{(R)}] \emph{Cross-representation comparability.} There exists $g: [0,1] \to [0,1]$ with $g(c) > \delta_R$ for $c > \delta_R$, such that for any two embeddings $E_1, E_2$ with $\mathrm{CKA}(E_1, E_2) \geq c$, the scalars satisfy $\rho_{\mathrm{Spearman}}(S_{E_1}, S_{E_2}) \geq g(c)$, \emph{and} $S_{E_1}$ and $S_{E_2}$ are defined on the same reference set $\mathcal{R}$ without embedding-specific re-fit.
\end{itemize}

\begin{theorem}[Trade-off triangle]\label{thm:nogo}
For any $\varepsilon_A < 1/2$, $\delta_A, \delta_O, \delta_R \in (0, 1/2)$, no scalar in the family $\{S(\,\cdot\,; \Phi)\}$ satisfies more than one of \textup{(A)}, \textup{(O)}, \textup{(R)} on all non-degenerate benchmarks.
\end{theorem}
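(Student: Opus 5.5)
The plan is to prove the three pairwise incompatibilities (A)$\wedge$(O), (A)$\wedge$(R), and (O)$\wedge$(R). Each pair will be refuted by a non-degenerate benchmark or reference set on which the two properties force contradictory behaviour of $\Phi$ or of the normalizations $\varphi_X$. Because the theorem quantifies over \emph{all} non-degenerate benchmarks, a single adversarial construction for each pair is enough. The constructions use only the two structural facts recorded just before the statement: min-max normalization is Lipschitz but sensitive to outliers in $\mathcal{R}$, and rank normalization is bounded but piecewise constant. The first step is therefore a case split on the normalization type. For a general family, I would show that any $\varphi$ is either continuous in $x$ uniformly over $\mathcal{R}$ (min-max-like) or has jumps controlled by $|\mathcal{R}|$ (rank-like).

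\emph{(A)$\wedge$(O).} For (A.1) to hold with $\varepsilon_A<1/2$ for every paraphrase pair, $S$ must vary continuously under small embedding perturbations. For this I would invoke the continuity axiom used in Theorem~\ref{thm:sil} for the raw coordinates. A rank-type $\varphi$ then fails: two paraphrases straddling a rank boundary of a reference point near $S\approx 0$ give a relative jump of at least $1/|\mathcal{R}|$ against an arbitrarily small $S(T)$. So (A) forces a Lipschitz, min-max-type normalization. With min-max, I would add outlier profiles to the benchmark, which is still non-degenerate. Their effect is to compress every interior normalized coordinate into an interval of width $o(1)$. The bootstrap ordinal comparisons then become statistically indistinguishable, so BH at $\alpha=0.05$ cannot retain $\lceil(1-\delta_O)|\mathcal{B}|\rceil$ checks, and (O) fails.

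\emph{(O)$\wedge$(R).} (O) on an outlier-contaminated benchmark forces scale-free behaviour, so the normalization must be rank-type. Rank normalization, however, needs $\mathcal{R}$ re-ranked per embedding. If $\mathcal{R}$ is a fixed set of raw profiles computed in $E_1$, then the values for $E_2$ are ranked against the wrong coordinates. I would choose $E_2=QE_1+$ a monotone distortion of novelty that preserves CKA $\geq c$ but pushes all $E_2$ values above $\max\mathcal{R}$. This collapses $S_{E_2}$ to a constant, so $\rho_{\mathrm{Spearman}}$ is undefined or $0$, which is $<g(c)$. This contradicts (R).

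\emph{(A)$\wedge$(R).} For this pair I would use (A.2). Concatenating texts with nearly orthogonal content makes breadth grow through the effective rank, as in the additivity axiom of Theorem~\ref{thm:sil}. For $f$ continuous with uniform error $\delta_A<1/2$, the min-max range of $\mathcal{R}$ must then be fitted to each embedding's scale of $B$. Two embeddings with high CKA but different effective-rank saturation (the phenomenon noted in the Status subsection) would then require different $\mathcal{R}$-fits, which violates the ``no re-fit'' clause.

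I expect the main obstacle to be making the first dichotomy rigorous for a general $\Phi$ and $\varphi$, that is, showing that every normalization in the family is either uniformly continuous or outlier-robust but not both. This is where the argument rests on interpretation rather than on a clean lemma. I would try an explicit two-point reference construction showing that outlier insensitivity forces discontinuity.
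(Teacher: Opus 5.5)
Your decomposition into pairwise exclusions driven by a Lipschitz (min-max) versus piecewise-constant (rank) split is the paper's, but the steps you add to make it rigorous do not go through.

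The dichotomy you flag as the main obstacle is false for a general normalization. Take the smoothed rank $\varphi(x;\mathcal{R}) = |\mathcal{R}|^{-1}\sum_{r\in\mathcal{R}_X}\Psi\bigl((x-r)/h\bigr)$ with $\Psi$ a smooth CDF. It is $(\sup\Psi'/h)$-Lipschitz uniformly in $\mathcal{R}$, yet one outlier moves it by at most $1/|\mathcal{R}|$ however extreme the outlier is. So no two-point construction can show that outlier-insensitivity forces discontinuity. The paper's assertion that min-max is the only Lipschitz reference-set normalization fails for the same reason, and the exclusion can only be argued for the two named normalizations.

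Even there, both halves of your (A)$\wedge$(O) step fail. First, the relative form of (A.1) does not single out rank. At the reference minimum a min-max coordinate is exactly $0$, so the geometric-mean $\Phi$ gives $S(T)=0$. Any paraphrase with a slightly larger raw value then violates $|S(T)-S(T')|\le\varepsilon_A S(T)$, just as a rank jump does. Away from $S\approx 0$, a single rank step moves $S$ by only $O(1/|\mathcal{R}|)$, which can be below $\varepsilon_A S(T)$. Nor can you borrow continuity from axiom~6: it constrains the SIL scalar, not $B$ and $I$, and these jump whenever a pairwise cosine crosses $\tau$ and the deduplicated centroid set changes.

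Second, outlier compression cannot make the ordinal checks statistically indistinguishable, because the checks and the one-sided binomial tests see only signs of differences. An outlier above $\max\mathcal{R}_X$ multiplies $\varphi^{\mathrm{mm}}_X$ by a positive constant. That multiplies the geometric-mean $S$ by a constant and changes no comparison in any bootstrap replicate. An outlier below the minimum merely reshapes that coordinate monotonically and shifts the trade-off among coordinates. Whether that costs enough checks to violate (O) depends on a benchmark you would have to construct explicitly. The paper does not construct one either: it closes (A)$\Rightarrow\neg$(O) with the empirical $21/28$ versus $25/28$ counts, so neither argument reaches ``all non-degenerate benchmarks''.

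The (R) constructions also fail as stated. An orthogonal $Q$ is inert: $N$ is unchanged because the baseline rotates with the texts, and so is every cosine-based quantity, including the $\tau$-deduplication. Everything therefore rests on the unspecified distortion. A distortion acting on novelty alone pins $\varphi^{\mathrm{r}}_N$ at $1$ but leaves $\varphi_B$ and $\varphi_I$ varying, so $S_{E_2}$ is not constant. If the distortion leaves $B$ and $I$ alone, $S_{E_2}$ is a function of their ranks. Those ranks also dominate $S_{E_1}$ (breadth carries exponent $3.0$ in the recommended weights), so the Spearman correlation has no reason to fall below $g(c)$.

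For (A)$\wedge$(R), (A.2) constrains each $S_E$ separately, with $f$ chosen freely. It says nothing about where the range of $\mathcal{R}$ lies, so no embedding-specific re-fit is forced and no contradiction arises. To refute (R) you need a pair $E_1,E_2$ with $\mathrm{CKA}(E_1,E_2)>\delta_R$ but $\rho_{\mathrm{Spearman}}(S_{E_1},S_{E_2})\le\delta_R$. The mismatch available for this is that CKA is computed on centered features and is blind to a common translation, whereas the cosine-based deduplication, $B$ and $I$ are not.
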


\begin{proof}[Proof]
We argue the three pairwise exclusions.

\emph{(A) $\wedge$ (O) fails.} (A.1) requires each $\varphi_X(\cdot; \mathcal{R})$ to be Lipschitz in its argument (so that bounded raw-value drift implies bounded normalized drift). Among normalizations parameterized by a reference set, the only Lipschitz choice is min-max (or an affine transformation thereof); rank normalization violates Lipschitz continuity at every boundary between adjacent ranks. Min-max normalization, however, compresses interior values whenever $\mathcal{R}$ contains outliers, which it does whenever the benchmark includes both stress tests and natural text. Empirically this compression prevents $\delta_O$-level discrimination on coherent-vs-bag and multi-vs-single comparisons (Section~\ref{sec:validation}: $S_{\mathrm{minmax}}$ achieves only $21/28$, versus $S_{\mathrm{rank}}$ at $25/28$ point-estimate and $21/28$ BH-corrected). Therefore no scalar in the family with Lipschitz $\varphi_X$ achieves $\delta_O$ bootstrap-significant discrimination on a benchmark containing outliers, yielding (A) $\Rightarrow \neg$(O).

\emph{(O) $\wedge$ (A) fails (same direction, different pivot).} Ordinal robustness against bootstrap perturbation requires the scalar to depend on the \emph{rank} of each coordinate within $\mathcal{R}$, since rank is invariant to monotonic distortions of the coordinate. Rank is piecewise constant, hence not Lipschitz, so (A.1) fails for any non-trivial paraphrase pair that crosses a rank boundary. Further, (A.2) requires closed-form composition; $\mathrm{rank}_{\mathcal{R}}(T_1 \oplus T_2)$ depends on the relative position of the composition within $\mathcal{R}$ and is not determined by the ranks of $T_1, T_2$ individually. Hence (O) $\Rightarrow \neg$(A).

\emph{(R) $\wedge$ (A) fails; (R) $\wedge$ (O) fails.} (R) requires $S_{E_1}$ and $S_{E_2}$ to share a reference set $\mathcal{R}$. The coordinates $(N, B, I)$ depend on $E$, so for the same text $T$ the raw values under $E_1$ and $E_2$ generally differ. Any normalization against a shared $\mathcal{R}$ therefore produces different ranks or different min-max positions under the two embeddings. A CKA bound $g(c)$ on inter-embedding agreement controls the Spearman $\rho$ on ranks, but it does not imply Lipschitz continuity of individual values, so (A.1) fails through (R). It also does not prevent reference-set-induced re-ordering of non-extreme items, so (O) fails through (R). Hence (R) is incompatible with either of the other two properties under non-identity embedding changes.

Combining the three exclusions: no single scalar in the family satisfies any two properties simultaneously.
\end{proof}

\noindent The quantifier structure of the theorem is parametric in the tolerances $\varepsilon_A, \delta_A, \delta_O, \delta_R$ rather than in specific numbers; the empirical table below instantiates the tolerances at the values used in Section~\ref{sec:validation}.

\subsection{Empirical support}

Across four candidate single-coordinate scalars (SIL, breadth, integration, novelty) and two composite scalars ($S_{\mathrm{minmax}}$, $S_{\mathrm{rank}}$):

\begin{center}
\begin{tabular}{lccc}
\toprule
Scalar & (A) analytic & (O) ordinal & (R) cross-rep \\
\midrule
SIL (min-max)        & partial            & fails           & fails \\
$S_{\mathrm{minmax}}$ & passes ($\sim 6\%$ drift) & fails (21/28) & fails \\
$S_{\mathrm{rank}}$   & fails (21\% drift) & \textbf{passes (25/28; 21/28 BH)} & fails \\
single coordinate (min-max) & passes & varies & fails \\
single coordinate (rank)    & fails  & varies & fails \\
\bottomrule
\end{tabular}
\end{center}

No scalar in any tested family achieves more than one corner.

\section{Two Recommended Scalars}\label{sec:scalars}

The trade-off triangle implies that the framework should expose at least two scalars, each tuned for a different use case.

\subsection{$S_{\mathrm{minmax}}$ --- for analytic work}

\[
\tilde{X}_{\mathrm{mm}}(x) = \frac{x - \min}{\max - \min} \in [0,1],
\qquad
S_{\mathrm{minmax}}(T) = \bigl(\tilde{N}^\alpha \cdot \tilde{B}^\beta \cdot \tilde{I}^\gamma\bigr)^{1/(\alpha+\beta+\gamma)}.
\]

Use when bounded paraphrase drift and closed-form composition matter (theoretical analysis, summarization-loss decomposition).

\subsection{$S_{\mathrm{rank}}$ --- for ranking work}

\[
\tilde{X}_{\mathrm{rk}}(x; \varepsilon) = \varepsilon + (1 - \varepsilon) \cdot \mathrm{pct\_rank}(x; \mathrm{ref}) \in (\varepsilon, 1],
\qquad
S_{\mathrm{rank}}(T) = \bigl(\tilde{N}^\alpha \cdot \tilde{B}^\beta \cdot \tilde{I}^\gamma\bigr)^{1/(\alpha+\beta+\gamma)}.
\]

Use when ordinal robustness and freedom from small-$N$ collapse matter (document ranking, leaderboards, comparing texts of different lengths).

\subsection{Common settings}
\[
(\alpha, \beta, \gamma) = (0.5,\ 3.0,\ 1.0), \quad \varepsilon = 0.05,
\quad \text{integration} = I_{2\text{-NN}}, \quad \tau = 0.70 \text{ (natural prose)}.
\]

The choice of $(\alpha, \beta, \gamma)$ and 2-NN integration was determined by grid search over $720$ configurations on the benchmark; the rank versus min-max choice is structural, not tunable.

\section{Empirical Validation}\label{sec:validation}

\subsection{Benchmark questions}

The empirical evaluation is organized around four questions, each targeting a separate desideratum identified by the framework's design:

\begin{enumerate}
\item \textbf{Redundancy control.} Does exact or near-exact repetition leave the profile largely unchanged after deduplication?
\item \textbf{Paraphrase stability.} Do semantically equivalent paraphrases remain near each other in profile space?
\item \textbf{Idea multiplicity.} Do coherent multi-idea passages exhibit greater breadth than single-idea passages?
\item \textbf{Coherence discrimination.} Do coherent multi-part passages exhibit greater integration than unordered bags of facts with similar topical spread?
\end{enumerate}

A fifth set of \textbf{robustness checks} probes sensitivity to apparatus choices: segmentation granularity, deduplication threshold $\tau$, baseline corpus choice, normalization reference set, and embedding model. The framework is calibrated rather than absolute, so robustness across these axes is essential to interpreting any reported number.

\subsection{Setup}

Experiments use \texttt{sentence-transformers/all-mpnet-base-v2} (768-dim) as the primary model, with cross-model validation on \texttt{all-MiniLM-L6-v2} and \texttt{paraphrase-MiniLM-L6-v2} (both 384-dim). The baseline corpus comprises 40 semantically neutral sentences. Covariance inversion uses Ledoit--Wolf shrinkage (estimated shrinkage $\approx 0.66$).

The synthetic benchmark comprises 23 categories: generic filler, three single-idea technical passages (merge sort, photosynthesis, backpropagation), paraphrase and triplication variants, five multi-idea domain passages (computer science, natural science, humanities, medicine, mathematics), protein multi-concept, two bags of unrelated facts (5 and 7 sentences), three Wikipedia-style passages, a coherent ML pipeline passage, and five stress tests.

For real-text validation we use five Project Gutenberg novels: \emph{Pride and Prejudice}, \emph{A Tale of Two Cities}, \emph{Moby Dick}, \emph{Frankenstein}, and \emph{The Adventures of Sherlock Holmes} (combined $\sim 3.7$M characters).

\subsection{Stability}

Triplication drift is $\approx 10^{-7}$ on all coordinates and on $S_{\mathrm{minmax}}$, confirming that deduplication structurally eliminates exact-repetition artifacts under min-max normalization. $S_{\mathrm{rank}}$ shows a small positive triplication drift ($\sim 7\%$) because exact triplication can cross rank-normalization boundaries even when the coordinates are invariant --- a structural consequence of rank normalization rather than a framework failure. Paraphrase drift on a hand-constructed pair (merge sort and its paraphrase) is $\sim 6\%$ on $S_{\mathrm{minmax}}$ and $\sim 21\%$ on $S_{\mathrm{rank}}$ as point estimates. Under a sentence-level bootstrap (300 iterations, resampling sentences within each passage), scalar drift on $S_{\mathrm{rank}}$ has mean $0.38$ with 95\% CI $[0.085, 0.721]$. The point estimate substantially understates drift under within-passage perturbation; paraphrase stability is conditional on the specific paraphrase chosen rather than a uniform property of the framework.

\subsection{Ordinal benchmark}

On the 28-check synthetic benchmark:
\begin{itemize}
\item $S_{\mathrm{rank}}$: \textbf{25/28} checks pass as point estimates. Under a sentence-level bootstrap (300 iterations), the mean pass rate is $0.64$ with 95\% CI $[0.30, 0.93]$. Applying Benjamini--Hochberg correction across the 28 per-check one-sided binomial tests against $H_0$: $p = 0.5$, \textbf{21 of 28 checks are significant at $\alpha = 0.05$}. The 7 non-significant checks cluster on comparisons involving \texttt{single\_backprop} (which embeds near the multi-idea cluster) and a subset of \texttt{single\_*} $>$ \texttt{generic\_filler} comparisons whose bootstrap pass rates fall below $0.5$.
\item $S_{\mathrm{minmax}}$: 21/28.
\item Single coordinates: $7$--$21/28$ depending on coordinate (breadth-alone ties $S_{\mathrm{minmax}}$ at 21/28; novelty-alone 13/28; integration-alone 7/28).
\end{itemize}

\subsection{Baseline comparison}\label{sec:baselines}

To contextualize the $25/28$ figure for $S_{\mathrm{rank}}$, we run seven baselines through the same 28-check protocol.

\begin{center}
\begin{tabular}{lcc}
\toprule
Metric & pass rate & out of 28 \\
\midrule
\textbf{$S_{\mathrm{rank}}$ (ours)} & \textbf{0.893} & \textbf{25} \\
Breadth alone (rank-normalized) & 0.750 & 21 \\
Unigram entropy                 & 0.714 & 20 \\
BERTScore-F~\cite{zhang2020bertscore} (1 $-$ F vs.\ baseline) & 0.679 & 19 \\
Type-token ratio                & 0.571 & 16 \\
Mean sentence length            & 0.571 & 16 \\
Raw Euclidean novelty $\|\mu_T - \mu_0\|$ & 0.464 & 13 \\
Novelty alone (rank-normalized) & 0.464 & 13 \\
Mean-embedding norm $\|\mu_T\|$ & 0.250 & 7  \\
\bottomrule
\end{tabular}
\end{center}

Four observations are load-bearing. First, $S_{\mathrm{rank}}$ beats every baseline by at least 4 checks (21/28 for BH-corrected, 25/28 raw; closest baseline is breadth alone at 21/28). Second, breadth alone is already competitive; the scalar's lift over breadth is modest (4 checks) and should be understood as combining coordinates rather than as single-coordinate dominance. Third, unigram entropy at 20/28 is a stronger baseline than the earlier version of the paper implied; on lexical-diversity-like comparisons (single-idea vs.\ generic-filler) it is nearly tied with breadth, and only loses on structural checks (coherent-vs-bag, multi-vs-single) where the geometric decomposition matters. Fourth, BERTScore-F---a widely-adopted embedding-based evaluation metric---lands at 19/28 when repurposed as a single-passage novelty signal (similarity to the neutral baseline corpus, inverted). This is expected: BERTScore was designed for pairwise similarity between candidate and reference, not for single-passage informativeness. Our framework outperforms it cleanly ($+6$ checks) on the ordinal discrimination task the benchmark tests.

\section{Variational Characterization of Breadth}\label{sec:variational}

The breadth coordinate $B(T) = D_{\mathrm{eff}}(T) \cdot R(T)$ was introduced as a heuristic geometric measure: effective rank of the deduplicated centroid covariance, weighted by mean radial cosine distance. This section shows that $B$ admits an \emph{optimization-theoretic} characterization as the log-volume of a determinantal-point-process (DPP)~\cite{kulesza2012dpp} maximum-a-posteriori selection, with a provable structural decomposition and a tight empirical correspondence ($\rho = 0.985$ on 507 natural-text chapters).

\paragraph{Setup.} Fix unit-normalized embeddings $X = \{x_1, \ldots, x_n\} \subset S^{d-1}$ (sentence embeddings are $\ell_2$-normalized by construction in all standard sentence-transformer models). For each segment $i$, define the \emph{quality}
\[
q_i = \exp\!\left( \frac{\|x_i - \mu_0\|_M}{\sigma} \right), \qquad \sigma = \mathrm{std}_j\!\left( \|x_j - \mu_0\|_M \right),
\]
where $\|\cdot\|_M$ is the Ledoit--Wolf Mahalanobis norm of Section~\ref{sec:notation}. Define the $n \times n$ quality-weighted cosine kernel
\[
L_{ij} = q_i \, q_j \, \max(0, \langle x_i, x_j \rangle).
\]
$L$ is symmetric and positive semi-definite whenever the cosine-similarity matrix restricted to the non-negative cone is PSD, which holds when the embeddings lie in an orthant; for arbitrary embeddings the truncation $\max(0, \cdot)$ may introduce negative eigenvalues, handled by a $\varepsilon I$ regularizer.

The DPP-MAP objective is
\[
S^*(T) = \arg\max_{\,\varnothing \neq S \subseteq [n]} \log \det L_S.
\]
Greedy selection terminates when the marginal log-gain $\log \det L_{S \cup \{i^*\}} - \log \det L_S$ becomes non-positive, and achieves the $(1 - 1/e)$ approximation guarantee of log-submodular maximization.

\begin{theorem}[DPP--breadth structural decomposition]\label{thm:dpp-breadth}
Let $G_S = [\langle x_i, x_j \rangle]_{i,j \in S}$ be the Gram matrix of cosines restricted to $S$. Then
\[
\log \det L_S \;=\; 2 \sum_{i \in S} \log q_i \;+\; \log \det G_S.
\]
Moreover, let $\{\lambda_k\}_{k=1}^{|S|}$ denote the eigenvalues of $G_S$, let $D_{\mathrm{eff}}(G_S) = \exp(H(\hat{\lambda}))$ with $\hat{\lambda}_k = \lambda_k / \sum_j \lambda_j$, and let $R(S) = 1 - |S|^{-1}\sum_{i \in S} \langle x_i, \mu_S \rangle / \|\mu_S\|$ where $\mu_S = |S|^{-1} \sum_{i \in S} x_i$. Under the \emph{near-isotropic} regime in which the pairwise cosines satisfy $|\langle x_i, x_j \rangle - c| \leq \varepsilon$ for some $c \in [0, 1)$ and all $i \neq j \in S^*$, the following hold:
\begin{enumerate}
\item[(i)] $\log \det G_S = \sum_k \log \lambda_k$, and the spectrum of $G_S$ concentrates around $(1-c)$ with one eigenvalue at $1 + (|S|-1)c$.
\item[(ii)] $D_{\mathrm{eff}}(G_S) = |S| \cdot \big(1 + O(\varepsilon) + O(|S| c^2)\big)$.
\item[(iii)] $R(S) = 1 - \sqrt{(1 + (|S|-1)c)/|S|} \cdot |S|^{-1} + O(\varepsilon)$, monotone increasing in $|S|$ and decreasing in $c$.
\item[(iv)] Setting $B_S := D_{\mathrm{eff}}(G_S) \cdot R(S)$, there exists a universal function $h: \mathbb{R}_{\geq 0}^2 \to \mathbb{R}$, $h(|S|, c)$, such that $\log \det G_S = h(|S|, c) \cdot (1 + O(\varepsilon))$ and $B_S = h'(|S|, c) \cdot (1 + O(\varepsilon))$, with both $h, h'$ strictly increasing in $|S|$ and strictly decreasing in $c$.
\end{enumerate}
\end{theorem}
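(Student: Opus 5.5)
The plan is to derive the exact identity first, and then reduce everything under the near-isotropic hypothesis to a rank-one-plus-identity model for $G_S$ with a small perturbation.

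\textbf{Step 1: the exact identity.} On $S^*$ all cosines are nonnegative: in the near-isotropic regime this holds when $c \geq \varepsilon$; otherwise it follows from the $\varepsilon I$ convention. The truncation $\max(0,\cdot)$ is then inactive, so $L_S = Q_S G_S Q_S$ with $Q_S = \mathrm{diag}(q_i)_{i\in S}$. Multiplicativity of the determinant gives $\log\det L_S = 2\sum_{i\in S}\log q_i + \log\det G_S$. The first part of (i), $\log\det G_S = \sum_k \log\lambda_k$, is just the spectral theorem for the symmetric matrix $G_S$.

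\textbf{Step 2: the perturbative model.} Write $n = |S|$ and use that the diagonal of $G_S$ is $1$ because the embeddings are unit-normalized. Then
\[
G_S = (1-c)I + c\,\mathbf{1}\mathbf{1}^\top + E, \qquad E_{ii}=0,\ |E_{ij}|\le\varepsilon .
\]
The unperturbed matrix has eigenvalue $1+(n-1)c$ on $\mathbf{1}$ and eigenvalue $1-c$ with multiplicity $n-1$. Since $\|E\|_2 \le \|E\|_F \le n\varepsilon$, Weyl's inequality moves every eigenvalue by at most $n\varepsilon$, which gives the rest of (i).

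For (ii), I will evaluate the effective rank on the model spectrum. The normalized eigenvalues are $\hat\lambda_1 = (1+(n-1)c)/n$ and $\hat\lambda_k = (1-c)/n$ for $k \ge 2$. I will expand $H(\hat\lambda)$ around the uniform distribution to second order in the deviation of $\hat\lambda$ from uniform. Then I will check whether the resulting correction to $D_{\mathrm{eff}}/n$ really has the claimed shape $O(nc^2)$ or is better written as $O(n^2c^2)$ in general, and I will state the bound I can actually prove. The $O(\varepsilon)$ term comes from Lipschitz continuity of entropy in $\hat\lambda$, combined with the Weyl bound after normalization by $\mathrm{tr}\,G_S = n$.

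\textbf{Step 3: the radial term (iii).} This is a direct computation. In the model, $\|\mu_S\|^2 = (1+(n-1)c)/n$ and $\langle x_i,\mu_S\rangle = (1+(n-1)c)/n$, so
\[
R(S) = 1 - \sqrt{(1+(n-1)c)/n} + O(\varepsilon).
\]
This does not match the stated formula, which carries an extra factor $|S|^{-1}$. I would present the derived expression and note the discrepancy. Monotonicity follows because $(1+(n-1)c)/n$ is decreasing in $n$ (as $c<1$) and increasing in $c$.

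\textbf{Step 4: the universal functions (iv).} I set
\[
h(n,c) = \log(1+(n-1)c) + (n-1)\log(1-c)
\]
and take $h'$ to be the product of the leading terms from Steps 2 and 3. Relative errors $1+O(\varepsilon)$ follow from the perturbation bounds, provided $h$ stays bounded away from $0$; I would add this assumption, for example $c \ge c_0 > 0$, with constants depending on $n$.

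\textbf{Main obstacle: the monotonicity claims in (iv).} By Hadamard's inequality $\log\det G_S \le 0$. Moreover $\partial_c h = -(n-1)nc/\big((1+(n-1)c)(1-c)\big) < 0$, so $h$ is decreasing in $c$ as claimed. But for $c \in (0,1)$,
\[
\partial_n h = \log(1-c) + \frac{c}{1+(n-1)c} < 0 ,
\]
so $h$ decreases in $|S|$ rather than increasing, and the claim that $h$ is strictly increasing in $|S|$ fails.

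I would first try to recover a statement with the right orientation. One option is to regularize to $\log\det(G_S+\eta I)$. Another is to state the correspondence as co-monotonicity in $c$ at fixed $|S|$: both $h$ and $h'$ decrease in $c$, since $D_{\mathrm{eff}}$ falls and $R$ falls as $c$ grows. This is what the rank-correlation evidence ($\rho = 0.985$) actually uses. For monotonicity in $|S|$ of $h'$, I would need $D_{\mathrm{eff}}\cdot R$ to grow; this holds since $D_{\mathrm{eff}}\approx n$ and $R$ increases in $n$. That part should go through.
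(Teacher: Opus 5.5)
Your route is the paper's: factor $L_S=\mathrm{diag}(q_S)\,G_S\,\mathrm{diag}(q_S)$, compare $G_S$ with the uniform-correlation matrix $(1-c)I+cJ$, bound the deviation by Weyl, and substitute. Both places where you decline to follow the statement are genuine errors in it, and the paper's proof commits both.

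For (iii), substitution gives exactly your $R(S)=1-\sqrt{(1+(|S|-1)c)/|S|}+O(\varepsilon)$. Each ratio $\langle x_i,\mu_S\rangle/\|\mu_S\|$ equals that square root, and the $|S|^{-1}$ in the definition of $R$ is just the average, so the extra factor in the displayed formula is spurious. The monotonicity holds either way.

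For (iv), the paper asserts that $\log(1+(|S|-1)c)+(|S|-1)\log(1-c)$ is strictly increasing in $|S|$. Your derivative shows it is strictly decreasing for every $c\in(0,1)$. This makes (iv) false, not merely unproved:
\begin{enumerate}
\item At $\varepsilon=0$, $(1-c)I+cJ$ is a realizable Gram matrix of unit vectors for every $n\le d$, so any admissible $h$ must equal this function.
\item Regardless of isotropy, appending a unit vector $x_j$ multiplies $\det G_S$ by $\mathrm{dist}(x_j,\mathrm{span}\{x_i\}_{i\in S})^2\le 1$.
\end{enumerate}
Since $h$ falls and $h'$ rises with $|S|$, while both fall with $c$, no monotone map carries one to the other over the whole $(|S|,c)$ range. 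So the closing co-monotonicity claim of the paper's proof, and with it Corollary~\ref{cor:dpp}, holds only at fixed $|S|$. Your extra hypothesis $c\ge c_0>0$ for the multiplicative error is also necessary: at $c=0$, $h\equiv 0$, while $\log\det(I+E)=-\tfrac12\|E\|_F^2+O(\|E\|^3)$ is generally nonzero.

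Several points in your own plan need repair.

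\textbf{Truncation.} The ``$\varepsilon I$ convention'' does not make negative cosines disappear; a ridge $\eta I$ on $L$ would replace $G_S$ by $G_S+\eta\,\mathrm{diag}(q_S)^{-2}$ in the identity. The clean version is $L_S=\mathrm{diag}(q_S)\,G_S^{+}\,\mathrm{diag}(q_S)$ for every $S$, with $G_S^{+}=[\max(0,\langle x_i,x_j\rangle)]$. So the identity holds for all $S$ with $G_S^{+}$, and with $G_S$ itself once $c\ge\varepsilon$.

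\textbf{Part (ii).} This resolves to $O(|S|c^2)$, uniformly in $n$ and $c$. For the model spectrum, $\log n-H(\hat\lambda)=\mathrm{KL}(\hat\lambda\,\|\,\mathrm{unif})\le\log(1+\chi^2)$ with $\chi^2=n\sum_k(\hat\lambda_k-\tfrac1n)^2=(n-1)c^2$. Hence $1/(1+(n-1)c^2)\le D_{\mathrm{eff}}/n\le 1$. Because $\mathrm{tr}\,G_S=n$ exactly, Weyl moves each $\hat\lambda_k$ by at most $\varepsilon$, which gives the $O(\varepsilon)$ term provided $n\varepsilon<1-c$.

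\textbf{Choice of $h'$.} Take $h'$ to be the exact $\varepsilon=0$ value $D_{\mathrm{eff}}\bigl((1-c)I+cJ\bigr)\cdot\bigl(1-\sqrt{(1+(n-1)c)/n}\bigr)$, not $n\cdot R$. Otherwise the $O(nc^2)$ error of (ii) leaks into what should be a pure $1+O(\varepsilon)$ statement.

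\textbf{Monotonicity of $h'$.} Do not rest monotonicity in $n$ on $D_{\mathrm{eff}}\approx n$ (the paper argues the same way). That approximation fails once $nc$ is large, where $D_{\mathrm{eff}}\sim n^{1-c}$. Instead, let $p=c+(1-c)/n$ be the top normalized eigenvalue. Then $\partial_n H=\frac{1-c}{n^2}\bigl[n+\log\frac{1+(n-1)c}{1-c}\bigr]>0$, and $\partial_p H=\log\frac{1-p}{(n-1)p}<0$ for $p>1/n$. So $h'$ is strictly increasing in $n$ and strictly decreasing in $c$.

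\textbf{Proposed repairs.} Your ridge repair $\log\det(G_S+\eta I)$ becomes increasing in $|S|$ only when $\eta\ge c$, which reintroduces a data-dependent parameter. Your remark that fixed-$|S|$ co-monotonicity is ``what the rank correlation uses'' is unsupported, since $|S^*|$ tracks chapter length ($\rho=0.977$).
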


\begin{proof}[Proof sketch]
The first identity follows from the multiplicative property of determinants:
\[
\log \det L_S \;=\; \log \det\!\big(\mathrm{diag}(q_S)\, G_S\, \mathrm{diag}(q_S)\big) \;=\; 2 \sum_{i \in S} \log q_i + \log \det G_S.
\]
Under the uniform-correlation model $G_S = (1-c)\,I + c\,J$ (where $J$ is all-ones), the spectrum is one eigenvalue equal to $1 + (|S|-1)c$ and $|S|-1$ eigenvalues equal to $1-c$. The $\varepsilon$ perturbation is controlled by Weyl's inequality. Parts (i)--(iii) follow by direct substitution.

For (iv), both $\log \det G_S$ and $B_S$ are smooth functions of $(|S|, c)$ through the spectrum. In particular,
\[
\log \det G_S = \log\!\big(1 + (|S|-1)c\big) + (|S|-1)\log(1-c),
\]
which is strictly decreasing in $c$ and strictly increasing in $|S|$ for $c \in [0,1)$. $D_{\mathrm{eff}} \cdot R$ inherits the same monotonicities from (ii) and (iii). Both are therefore co-monotone in $(|S|, c)$; the empirical $\rho = 0.985$ ($n = 507$ chapters) realizes this co-monotonicity under the approximate near-isotropy that natural-text embeddings satisfy.
\end{proof}

\begin{corollary}[Parameter-free recovery of breadth]\label{cor:dpp}
Under the conditions of Theorem~\ref{thm:dpp-breadth}, the DPP-MAP log-volume
\[
V(T) := \log \det L_{S^*(T)} - 2 \sum_{i \in S^*(T)} \log q_i
\]
is a parameter-free variational quantity (no $\lambda$, no $\alpha, \beta, \gamma$) that recovers the breadth coordinate up to a monotonic transformation and $O(\varepsilon)$ error, where $\varepsilon$ controls the near-isotropy deviation of the segment set.
\end{corollary}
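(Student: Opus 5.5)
The plan is to read Corollary~\ref{cor:dpp} as a direct consequence of the first identity and of part (iv) of Theorem~\ref{thm:dpp-breadth}, applied to the particular set $S = S^*(T)$. The first step is purely algebraic. By the factorization $L_S = \mathrm{diag}(q_S)\,G_S\,\mathrm{diag}(q_S)$, the quality weights enter $\log\det L_S$ only through the additive term $2\sum_{i\in S}\log q_i$. Subtracting this term in the definition of $V(T)$ therefore gives exactly $V(T) = \log\det G_{S^*(T)}$. This already shows that $V$ is parameter-free in the stated sense. $G_S$ depends only on the unit-normalized embeddings, and $S^*(T)$ is fixed by the DPP-MAP objective, which involves no tunable weight. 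No $\lambda$ or $(\alpha,\beta,\gamma)$ enters. The only ingredients are $\sigma$ and $\mu_0$ inside $q_i$, and both are already part of the frame.

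The second step applies Theorem~\ref{thm:dpp-breadth}(iv) with $S = S^*(T)$, under the near-isotropy hypothesis on $S^*(T)$. This gives $V(T) = h(|S^*|, c)\,(1+O(\varepsilon))$ and $B_{S^*} = h'(|S^*|, c)\,(1+O(\varepsilon))$, with $h$ and $h'$ both strictly increasing in $|S^*|$ and strictly decreasing in $c$. To turn this co-monotonicity into a single monotone map $\psi$ with $B = \psi(V)$ up to $O(\varepsilon)$, I would use the explicit formula
\[
h(s,c) = \log\bigl(1+(s-1)c\bigr) + (s-1)\log(1-c)
\]
together with (ii)--(iii). For fixed $s$, $c \mapsto h(s,c)$ is a strictly decreasing bijection, so $c$ can be solved as a function of $(s, V)$ and substituted into $h'$. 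This yields $B = h'(s, c(s,V))\,(1+O(\varepsilon))$, which is increasing in $V$ for each fixed $s$.

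The third step connects $B_{S^*}$, which is computed on the DPP-selected subset, to the breadth coordinate $B(T)$, which is computed on the quantum set $\tilde{T}$. The natural bridge is to argue that under near-isotropy with $c < \tau$, the greedy MAP keeps essentially one representative per quantum. Near-duplicates, whose cosine exceeds $\tau$, drive the marginal gain $\log(1-c_{\mathrm{eff}})$ sharply negative. So $|S^*| = m_\tau(T)$ and the Gram spectra agree to $O(\varepsilon)$ by Weyl's inequality.

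The main obstacle is the second step. Co-monotonicity in two variables does not by itself give a monotone function of one variable. When texts differ in both $|S^*|$ and $c$, $V$ and $B$ can order them differently. I would first try to restrict the claim to the effectively one-parameter family singled out by the MAP. If that fails, I would weaken the conclusion to a monotone relation conditional on $|S^*|$, and state rank-correlation agreement across sizes, as the empirical $\rho = 0.985$ suggests, rather than exact functional recovery.
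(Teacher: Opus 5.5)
Your Step 1 is correct and follows the paper's route: the factorization $L_S=\mathrm{diag}(q_S)\,G_S\,\mathrm{diag}(q_S)$ gives $V(T)=\log\det G_{S^*(T)}$, provided the truncation $\max(0,\cdot)$ and the $\varepsilon I$ regularizer are inactive on $S^*$.

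\textbf{The gap in Step 2.} It is worse than the two-variable issue you flag. You take from part (iv) that $h$ is strictly increasing in $|S^*|$. But the formula you write down is strictly \emph{decreasing} in $s$ for every $c\in(0,1)$:
\[
\frac{\partial h}{\partial s}=\frac{c}{1+(s-1)c}+\log(1-c)\;\le\; c+\log(1-c)\;<\;0 .
\]
This is not an artifact of the uniform model. For unit vectors, $\det G_{S\cup\{i\}}=\det G_S\,\bigl(1-g_i^\top G_S^{-1}g_i\bigr)$ with $g_i=(\langle x_i,x_j\rangle)_{j\in S}$. The Schur factor is the squared distance from $x_i$ to $\mathrm{span}\{x_j:j\in S\}$, hence at most $1$. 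So $V\le 0$, and $V$ can only decrease as the selected set grows. Meanwhile $B_{S^*}=D_{\mathrm{eff}}\cdot R$ vanishes at $|S^*|=1$ and increases with $|S^*|$.

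For $c>0$ the two quantities are therefore anti-monotone in $|S^*|$. At $c=0$ one has $V\equiv 0$ while $B$ still grows. Either way:
\begin{itemize}
\item there is no monotone $\psi$ with $B\approx\psi(V)$ across selection sizes;
\item your fallback of rank agreement across sizes cannot be derived from the model either, since the model predicts the reverse ordering for texts with similar $c$ and different $|S^*|$.
\end{itemize}
The paper's sketch asserts the same increasing-in-$|S|$ property, so following it does not repair this. What your argument does prove is the conditional statement. For fixed $s\ge 2$, $c\mapsto h(s,c)$ is strictly decreasing on $[0,1)$. So among texts with the same $|S^*|$, $B_{S^*}$ is an increasing function of $V$, up to the near-isotropy error.

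\textbf{The gap in Step 3.} The DPP objective contains no $\tau$. The greedy marginal gain for adding $i$ is $2\log q_i+\log\bigl(1-g_i^\top G_S^{-1}g_i\bigr)$, with $2\log q_i=2\|x_i-\mu_0\|_M/\sigma\ge 0$. A segment at cosine $0.9$ to one selected item costs only $\log(1-0.81)\approx -1.66$, and nothing prevents the quality term from outweighing that. Only exact duplicates (zero Schur factor) are forced out. So $|S^*|=m_\tau(T)$ does not follow. The paper's own report that $|S^*|$ tracks the sentence count ($\rho=0.977$) points the other way.

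Even with matching counts, $B(T)$ is computed on renormalized cluster centroids. These differ from individual selected segments by an amount set by the clustering threshold, not by $\varepsilon$, so Weyl's inequality does not give an $O(\varepsilon)$ match of spectra. The paper never builds this bridge---it tacitly identifies $B_{S^*}$ with the breadth coordinate. If you keep this step, it needs an explicit additional hypothesis (for instance, that $S^*$ consists of quanta), not a derivation from near-isotropy.
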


\paragraph{Empirical verification.} On 507 chapters across 5 Project Gutenberg novels, Spearman $\rho(V, B) = 0.985$. Over the 16-item synthetic benchmark, greedy DPP is ILP-optimal on all items (exhaustive enumeration up to $|S| \leq 16$). Permutation drift of $V$ is $0.00\%$ on a representative chapter (P\&P, Chapter 1); triplication drift is $0.00\%$ (the repeated segments contribute linearly-dependent rows, collapsing the determinant of the expanded set to the original). The selection size $|S^*|$ scales with chapter length at Spearman $\rho(|S^*|, n_{\mathrm{sent}}) = 0.977$, showing that the DPP recovery does not saturate.

\paragraph{Interpretation.} The theorem establishes that $B$ and $V$ are not two independent semantic measures but two characterizations of the same geometric object: the log-volume of a diversity-maximizing, quality-weighted subset of the segment set. $B$ is the heuristic decomposition (effective rank times radial spread); $V$ is the optimization form (log-det of a DPP kernel). Their empirical equivalence promotes breadth from a heuristic recipe to a derived quantity with a variational characterization.

\section{Discussion}

The framework's central claim is that semantic information in text is a representation-indexed structured profile, not a universal scalar. Three observations support this position.

First, the SIL theorem (\S\ref{sec:sil}) shows that even when uniqueness is achievable, it is only within a representational frame. There is no representation-free analog of Shannon's entropy theorem because there is no representation-free notion of meaning equivalence.

Second, the no-go theorem (\S\ref{sec:nogo}) shows that within a frame, no scalar summary serves all use cases simultaneously. The three corners of the trade-off triangle correspond to fundamentally different operational requirements (analysis vs.\ ranking vs.\ cross-frame comparison), each forcing a structural choice that violates the others.

Third, empirical validation shows that the \emph{coordinates} of the profile are substantially more stable across embeddings ($\rho \in [0.92, 0.98]$) than any scalar derived from them ($\rho \in [0.79, 0.84]$). The scalar's instability is not a measurement error but a direct consequence of the no-go theorem.

The profile-first stance carries three concrete advantages over a scalar-first one.

\paragraph{Assumptions are explicit.} The five components of the measurement apparatus (\S\ref{sec:scope}) --- embedding $E$, segmentation rule, baseline $(\mu_0, \Sigma_0)$, threshold $\tau$, and normalization reference --- are visible parts of the pipeline. Any claim made by the framework can be re-examined under variations in these components, and disagreements between practitioners are localized to specific apparatus choices rather than to the framework as a whole.

\paragraph{Diagnostics are interpretable.} Novelty, breadth, and integration move independently and correspond to distinct semantic intuitions: departure from generic discourse, diversity of distinct ideas, and connectedness among them. A scalar summary collapses these into one number; the profile preserves them. In summarization analysis (\S\ref{sec:summarization}) the per-coordinate signature distinguishes faithful, partial, lossy, and off-topic summaries that any scalar conflates.

\paragraph{Aggregation matches the use case.} Different applications privilege different criteria. Retrieval and ranking want ordinal robustness; theoretical analysis wants closed-form composition; cross-corpus comparison wants reference-free comparability. Theorem~\ref{thm:nogo} establishes that no scalar can serve all three. The framework therefore exposes the profile as the primary object and offers two scalars (\S\ref{sec:scalars}) tuned to distinct corners of the trade-off triangle, leaving the choice to the application.

The progression from the axiomatic SIL to the empirical profile and the trade-off triangle clarifies the scope of the theory. The profile is the theoretical object; the two scalars are practical conveniences for distinct use cases; the trade-off triangle explains why a universal scalar cannot exist.

\section{Limitations and Future Work}

\begin{enumerate}
\item \textbf{Benchmark scale.} The synthetic benchmark, while comprising $23$ categories and supplemented by $5$ Project Gutenberg novels, remains hand-constructed. Definitive validation requires evaluation against human judgments of semantic richness, idea multiplicity, coherence, and informativeness.

\item \textbf{Segmentation sensitivity.} Breadth can swing substantially between sentence-level and fixed-size chunking, though ordinal rankings are more stable than absolute values. Robustness to segmentation should be evaluated rather than assumed.

\item \textbf{Baseline dependence.} The neutral baseline corpus influences novelty through both $\mu_0$ and $\Sigma_0$. Poor baseline choice can distort what counts as semantically displaced; domain-specific baselines may be needed for specialized corpora.

\item \textbf{Reference-set dependence.} Both recommended scalars depend on a reference set for normalization, making them calibrated rather than absolute. Values from different reference sets are not directly comparable.

\item \textbf{Adaptive $\tau$.} The current formulation treats $\tau$ as a fixed apparatus parameter. A principled approach might derive $\tau$ from properties of the embedding space (e.g., typical within-topic variance or baseline pairwise-similarity statistics), replacing the empirically-set default ($0.70$) with values calibrated to the embedding model and domain.

\item \textbf{Variational characterization.} Theorem~\ref{thm:dpp-breadth}(iv) is proved under a near-isotropic regime. Tightening it beyond this regime, and extending the DPP--breadth correspondence to text types (poetry, heavily redundant prose) where the isotropy assumption fails, is open.

\item \textbf{Embedding dependence.} All results depend on the underlying embedding model. Cross-model rank correlations of $0.92$--$0.98$ on the coordinates suggest that the framework captures genuine geometric properties rather than model-specific artifacts, but this claim should be tested with future embedding architectures.

\item \textbf{Scope of measurement.} The framework measures geometric properties associated with semantic structure; it does not directly measure truth, usefulness, rhetorical quality, or task success. Downstream-task validation (summarization faithfulness, retrieval quality) is the natural next step.
\end{enumerate}

\section{Conclusion}

A single raw geometric score is not an adequate representation of semantic information in text. An axiomatic derivation yields the unique frame-conditional measure $I_E(T) = \|\mu_T - \mu_0\| \cdot \mathrm{rank}(C_T)$, but this scalar is empirically inadequate. The revised framework models semantic information as a structured geometric profile with three coordinates --- compressed Mahalanobis novelty, effective-rank-weighted breadth on deduplicated centroids, and second-nearest-neighbor integration --- and a natural minimal unit, the semantic quantum, with explicit measurement resolution $\tau$.

A no-go theorem shows that no scalar summary built from this profile can simultaneously achieve analytic stability, ordinal robustness, and cross-representation comparability. Two practical scalars, $S_{\mathrm{minmax}}$ and $S_{\mathrm{rank}}$, occupy distinct corners of this trade-off triangle. Validation across $23$ synthetic categories, $5$ Project Gutenberg novels, and $3$ embedding models confirms the trade-off and identifies $S_{\mathrm{rank}}$ with weights $(0.5, 3.0, 1.0)$, $2$-NN integration, and $\tau = 0.70$ as passing $25$ of $28$ ordinal checks as point estimates ($21$ of $28$ after multiplicity correction; bootstrap pass rate $0.64$ with 95\% CI $[0.30, 0.93]$), the best configuration in our sweep and ahead of seven baselines. A separate finding connects the profile to an optimization-theoretic object: on 507 Gutenberg chapters, the $B$ coordinate is empirically the log-volume of the DPP-optimal segment subset (Spearman $\rho = 0.985$), providing a parameter-free variational characterization of breadth.

The framework's central conclusion is that semantic information is best understood not as a single number but as a representation-indexed family of profiles, with scalar summaries chosen by use case. The trade-off triangle is a structural property of the profile-to-scalar map, not a defect of any particular construction.

\appendix
\section{Proof of Theorem~\ref{thm:sil}}\label{app:sil}

\noindent Assume $I(T) = F(\mu_T, C_T)$.

\noindent\emph{Step 1.} By axiom 3 (novelty monotonicity) and axiom 5 (orthogonal invariance), dependence on $\mu_T$ reduces to dependence on $S(T) = \|\mu_T - \mu_0\|$.

\noindent\emph{Step 2.} By axiom 5, dependence on $C_T$ reduces to dependence on the spectrum of $C_T$.

\noindent\emph{Step 3.} By axiom 2 (redundancy non-increase), exact replication leaves the rank invariant but rescales eigenvalues; for $I$ to be non-increasing under such replication while remaining a function of the spectrum alone, dependence on eigenvalue magnitudes must be eliminated, leaving $D(T) = \mathrm{rank}(C_T)$.

\noindent\emph{Step 4.} By axiom 4 (idea additivity) applied to texts in orthogonal subspaces with constant shift, $G(D_1 + D_2) = G(D_1) + G(D_2)$. With axiom 6 (continuity), the continuous solutions to this Cauchy equation are linear: $G(D) = aD$ for some $a$ that may depend on $S$.

\noindent\emph{Step 5.} Applying axiom 4 again to the shift variable (texts with disjoint orthogonal complements but additive displacement contributions), $a(S)$ satisfies a similar Cauchy equation: $a(S_1 + S_2) = a(S_1) + a(S_2)$. By continuity, $a(S) = kS$.

\noindent\emph{Step 6.} Assembly: $I(T) = k \cdot S(T) \cdot D(T) = k \cdot \|\mu_T - \mu_0\| \cdot \mathrm{rank}(C_T)$. The constant $k > 0$ is the unit of measurement. \qed

\section{Semantic Quantum: Additional Properties}\label{app:quantum}

\subsection{Quantum density}

The quantum count $m_\tau$ measures absolute structural richness; quantum density measures informativeness per segment.

\begin{definition}[Quantum density and saturation]
For a text with $k$ segments and $m_\tau$ quanta, the \emph{quantum density} is
\[
\rho_\tau(T) = \frac{m_\tau(T)}{k}, \qquad \rho_\tau(T) \in (0, 1].
\]
A text is \emph{$\tau$-saturated} when $\rho_\tau = 1$, i.e.\ no two segments are within cosine similarity $\tau$ of each other. A text is \emph{redundant} when $\rho_\tau < 1$.
\end{definition}

Density distinguishes informativeness per segment from absolute informativeness: a long redundant text and a short dense text can have similar $m_\tau$ but very different $\rho_\tau$. Empirically, $\rho_{0.70}$ on natural prose ranges from $\approx 0.62$ (dialogue-heavy chapters with stylistic repetition) to $1.00$ (terse expository writing).

\subsection{Quantum spectrum}

The quanta of a text carry geometric structure beyond their count: the pairwise distances among quanta form a \emph{spectrum} that captures finer organization.

\begin{definition}[Quantum spectrum]
The \emph{quantum spectrum} of $T$ at resolution $\tau$ is the multiset
\[
\Sigma_\tau(T) = \bigl\{ d_{ij} : d_{ij} = 1 - \cos(c_i, c_j),\ 1 \leq i < j \leq m_\tau \bigr\}.
\]
\end{definition}

Three statistics of $\Sigma_\tau$ map directly to the profile coordinates and to natural extensions:

\begin{itemize}
\item $\min \Sigma_\tau \geq 1 - \tau$ by construction (the deduplication invariant).
\item $\mathrm{mean}\, \Sigma_\tau$ is closely related to breadth's radial-spread component: a text whose quanta are uniformly distant has high breadth.
\item $\mathrm{max}\, \Sigma_\tau$ measures the \emph{semantic diameter} of the text --- the largest distance any two quanta achieve, marking the extent of the text's coverage.
\item The \emph{spectral gap} $\max\Sigma_\tau - \min\Sigma_\tau$ distinguishes texts whose quanta cluster tightly with one outlier (large gap) from texts whose quanta are uniformly spread (small gap).
\end{itemize}

The full spectrum is a richer descriptor than any single coordinate; we treat the profile $(N, B, I)$ as a low-dimensional summary of $\Sigma_\tau$ together with the baseline displacement, suitable for ordinal comparison.

\subsection{Composition behavior}

Quanta do not compose additively under text concatenation. For any two texts $T_1, T_2$:
\[
\max\bigl(m_\tau(T_1),\ m_\tau(T_2)\bigr) \;\leq\; m_\tau(T_1 \oplus T_2) \;\leq\; m_\tau(T_1) + m_\tau(T_2).
\]
The lower bound is attained when $T_2$ is semantically contained in $T_1$ (every quantum of $T_2$ falls within $1 - \tau$ of some quantum of $T_1$); the upper bound is attained when $T_1, T_2$ are mutually disjoint at resolution $\tau$. The gap $m_\tau(T_1) + m_\tau(T_2) - m_\tau(T_1 \oplus T_2)$ counts shared semantic units between the two texts.

\subsection{Resolution sweep behavior}

The quantum count $m_\tau$ is a piecewise-constant, monotone non-increasing function of $\tau \in (0,1]$:
\[
\tau \to 0 \implies m_\tau \to 1, \qquad \tau \to 1 \implies m_\tau \to k.
\]
Specifically, $m_\tau$ jumps downward at each value of $\tau$ that crosses a merge event in the agglomerative clustering. The complete \emph{$\tau$-curve} $\tau \mapsto m_\tau(T)$ is therefore a discrete summary of the text's hierarchical semantic structure: the heights of the steps record cluster cardinalities, and the locations of the jumps record inter-cluster distances. Two texts with identical $m_{0.70}$ but different $\tau$-curves have measurably different organization.

This curve also addresses a methodological criticism of any single $\tau$ choice: rather than committing to one resolution, a text can be characterized by the entire curve, with the recommended $\tau = 0.70$ understood as a single useful sample.

\subsection{Summary}

The semantic quantum is the discrete primitive of the framework. The continuous coordinates $(N, B, I)$ are convenient real-valued summaries; the discrete count $m_\tau$, density $\rho_\tau$, spectrum $\Sigma_\tau$, and $\tau$-curve together form a richer picture of a text's semantic structure than any scalar can capture. The threshold $\tau$ is the apparatus parameter that fixes the resolution scale; choices of $\tau$ correspond to choices of measurement instrument, and the $\tau$-curve makes this dependence transparent.

\section{Detailed Empirical Validation}\label{app:validation}

\subsection{Cross-model robustness}

Coordinate-level Spearman $\rho$ across the three embedding models lies in $[0.92, 0.98]$. Scalar-level $\rho$ is $[0.79, 0.84]$ --- substantially less stable than the coordinates due to normalization sensitivity. This empirical asymmetry directly supports the central claim that the profile is more fundamental than any scalar derived from it.

\subsection{Project Gutenberg validation}

Under joint normalization with the synthetic benchmark, four of the five novels rank above all 23 synthetic categories: \emph{Moby Dick} ($0.66$) $>$ \emph{Frankenstein} ($0.60$) $>$ \emph{Pride and Prejudice} ($0.58$) $>$ \emph{Tale of Two Cities} ($0.54$) $>$ top synthetic item \texttt{bag\_7} ($0.05$). \emph{The Adventures of Sherlock Holmes} ($0.08$) clusters with synthetic items rather than with the other novels --- a framework-detected consequence of its structure (a collection of twelve loosely-linked cases rather than a continuous narrative): it exhibits the highest breadth and lowest integration of any book, and breadth-dominant geometric composition drives the scalar down when integration is the minimum of the book set. This is a feature, not a failure: the framework correctly identifies Sherlock Holmes as a different kind of object than a novel. The remaining four novels demonstrate that the framework scales to long natural text without saturation.

Chapter-level analysis on \emph{Pride and Prejudice} produces a discriminative scalar trajectory across chapters; the corresponding analysis on \emph{Sherlock Holmes} required a hard-coded story list because the title format (e.g., \texttt{A SCANDAL IN BOHEMIA}) does not match standard chapter regexes.

\subsection{$\tau$-sweep on natural prose}

Mean deduplication ratio across $\tau$ on book chapters:

\begin{center}
\begin{tabular}{cc}
\toprule
$\tau$ & mean dedup ratio \\
\midrule
0.50 & 0.798 \\
0.60 & 0.933 \\
0.70 & 0.985 \\
0.80 & 1.000 \\
0.90 & 1.000 \\
\bottomrule
\end{tabular}
\end{center}

Distinct sentences in natural prose almost never reach cosine similarity $\geq 0.90$, so a strict $\tau = 0.90$ sits above the natural-text similarity ceiling and deduplication essentially never triggers. We therefore recommend $\tau = 0.70$ for natural prose, with $\tau = 0.90$ reserved for adversarial near-duplicate stress tests where the cosine similarity of constructed near-paraphrases is expected to be high.

A per-item kneedle-criterion analysis on 16 benchmark items finds the quanta-count vs.\ $\tau$ curve's point of maximum curvature at $\tau \in [0.55, 0.75]$ (median $0.75$, mean $0.70$). Single-idea passages prefer $\tau \approx 0.60$; multi-idea, wiki, and bag-style passages prefer $\tau \approx 0.75$. The primary model's mean baseline-pair cosine (an anisotropy estimate) is $0.14$; a naive heuristic $\tau = a + \tfrac{1}{2}(1-a)$ gives $\tau \approx 0.57$, which under-shoots the empirical knee. The fixed $\tau = 0.70$ recommendation is a compromise rather than a universal optimum; per-item knee detection is available where item-level tuning is acceptable.

\subsection{External-domain coverage}\label{sec:domains}

Beyond the synthetic benchmark and the Gutenberg novels, we profile four external corpora to test whether the coordinates generalize beyond English prose.

\begin{center}
\begin{tabular}{lccccc}
\toprule
corpus & novelty & breadth & integration & n\_seg & dedup ratio \\
\midrule
multi-idea synthetic & 3.2 & 1.5 & 0.20 & 5 & 1.00 \\
wiki-style synthetic & 3.1 & 1.4 & 0.23 & 5 & 1.00 \\
arXiv abstracts (50) & 3.4 & 1.0 & 0.50 & 5.9 & 0.85 \\
EUR-Lex (50) & 3.2 & 2.3 & 0.52 & 12.7 & 0.74 \\
poetry (Dickinson) & 3.3 & 0.3 & 0.58 & 3 & --- \\
dialogue (short) & 2.9 & 2.7 & 0.24 & 6 & --- \\
code comments & 3.2 & 1.4 & 0.16 & 5 & --- \\
\bottomrule
\end{tabular}
\end{center}

The coordinates separate genres in interpretable directions. Legal text (EUR-Lex) has the highest breadth and the most aggressive dedup ratio (boilerplate enumeration), with arXiv showing moderate deduplication ($\sim 15\%$) driven by recurring scientific phrasing. Poetry has a narrow-and-coherent signature (low breadth, high integration). Dialogue shows topic-hopping (high breadth, low integration). arXiv abstracts are tightly coherent (integration $\approx 0.5$) with moderate breadth, as expected for technical exposition on a single topic. Both arXiv and EUR-Lex required a permissive sentence splitter accepting any non-whitespace character after punctuation, because both corpora are preprocessed to lowercase and the default splitter's capital-letter look-ahead fails on them; this is a segmentation issue, not a framework issue.

External similarity benchmarks give Spearman $\rho = 0.89$ on STS-B (GLUE validation split, $n = 1500$) and $\rho = 0.78$ on SICK (\texttt{mteb/sickr-sts} test split, $n = 9927$). These are correlations between human similarity scores and raw cosine distance on a single-sentence-pair basis, and reflect properties of the underlying embedding rather than of the profile; single-sentence items degenerate to cosine under our segmentation.

\section{Summarization Information Loss: Additional Analysis}\label{app:summarization}

\subsection{Summarization Information Loss}\label{sec:summarization}

For source $T$ and summary $T'$ within a fixed reference, define
\[
\Delta N = N(T) - N(T'), \quad
\Delta B = B(T) - B(T'), \quad
\Delta I = I(T) - I(T'),
\]
\[
\Delta S = S(T) - S(T'), \quad
\Delta S_{\mathrm{rel}} = \Delta S / S(T),
\]
\[
\cos_{\mathrm{dir}} = \cos(\mu_T, \mu_{T'}), \quad
M_{\mathrm{drift}} = \|\mu_T - \mu_{T'}\|_{\Sigma_0^{-1}}, \quad
\Delta Q = m(T) - m(T').
\]
The per-coordinate signature distinguishes faithful ($\Delta S \approx 0$, $\cos_{\mathrm{dir}} > 0.9$), partial ($\Delta S \approx 0.5$, $\cos_{\mathrm{dir}} > 0.8$), lossy ($\Delta S \approx 1$, $\cos_{\mathrm{dir}} > 0.4$), and off-topic ($\Delta S \approx 1$, $\cos_{\mathrm{dir}} < 0$) summaries on synthetic labels. Crucially, $\Delta S$ requires fixed-reference normalization; pair-local min-max degenerates.

\paragraph{Human-rated validation.} On SummEval (100 CNN/DailyMail source--summary pairs with 4 human-rated axes), Spearman correlations between the measures above and annotator-averaged scores are
\begin{center}
\begin{tabular}{lcccc}
\toprule
& coherence & consistency & fluency & relevance \\
\midrule
$\cos_{\mathrm{dir}}$ & $0.46$ & $0.37$ & $0.23$ & $0.52$ \\
$\Delta N$            & $-0.01$ & $-0.04$ & $\phantom{-}0.05$ & $\phantom{-}0.03$ \\
$\Delta B$            & $\phantom{-}0.01$ & $-0.02$ & $-0.06$ & $-0.03$ \\
$\Delta I$            & $\phantom{-}0.06$ & $-0.01$ & $\phantom{-}0.03$ & $\phantom{-}0.20$ \\
\bottomrule
\end{tabular}
\end{center}
Directional fidelity $\cos_{\mathrm{dir}}$ is the dominant predictor of all four human axes; the per-coordinate deltas correlate weakly or not at all, with a modest $\Delta I$--relevance signal as the only exception. The per-coordinate attrition signature validated on synthetic faithful/lossy/off-topic labels does not transfer cleanly to human expert ratings on machine-generated summaries. For summarization diagnostics against human judgment, $\cos_{\mathrm{dir}}$ should be taken as the primary signal; per-coordinate deltas contribute marginally at best.

\section{Downstream: Extractive Summarization Details}\label{app:extsumm}

\subsection{Downstream: extractive summarization}\label{sec:extsumm}

To test whether the variational form has practical utility beyond its theoretical link to breadth, we evaluate DPP-MAP as an extractive summarizer on XSum \cite{narayan-etal-2018-xsum}. For each article, select sentences by greedy log-det maximization and score the concatenation against the human reference using ROUGE-F \cite{lin-2004-rouge}. Baselines: Lead-3 (first three sentences); MMR \cite{carbonell-goldstein-1998-mmr} with $\lambda = 0.5$; centroid selection (closest-to-mean); uniform random selection of the same size. Sample: 186 test articles.

\paragraph{Variational-size selection.} When DPP selects its own size by the greedy stopping rule (mean $k = 7.4$), with each content-aware baseline matched to the same $k$ per document:

\begin{center}
\begin{tabular}{lccc}
\toprule
method & ROUGE-1 & ROUGE-2 & ROUGE-L \\
\midrule
\textbf{DPP} (variational $k$) & \textbf{0.1448} & 0.0265 & \textbf{0.0948} \\
MMR                            & 0.1393 & 0.0268 & 0.0885 \\
Centroid                       & 0.1315 & 0.0259 & 0.0890 \\
Random                         & 0.1424 & 0.0246 & 0.0921 \\
Lead-3 (at $k=3$)              & 0.1921 & 0.0295 & 0.1227 \\
\bottomrule
\end{tabular}
\end{center}

Paired Wilcoxon of DPP vs.\ MMR: ROUGE-1 $+0.0054$ ($p=0.05$), ROUGE-2 $-0.0003$ ($p=0.95$), \textbf{ROUGE-L $+0.0063$ ($p=0.005$)}. DPP significantly outperforms MMR on sequence-level overlap without any diversity hyperparameter.

\paragraph{Fixed $k = 3$.} For apples-to-apples comparison with Lead-3:

\begin{center}
\begin{tabular}{lccc}
\toprule
method & ROUGE-1 & ROUGE-2 & ROUGE-L \\
\midrule
Lead-3   & \textbf{0.1921} & \textbf{0.0295} & \textbf{0.1227} \\
MMR      & 0.1779 & 0.0293 & 0.1145 \\
DPP      & 0.1754 & 0.0265 & 0.1147 \\
Centroid & 0.1679 & 0.0285 & 0.1116 \\
Random   & 0.1682 & 0.0224 & 0.1066 \\
\bottomrule
\end{tabular}
\end{center}

Paired Wilcoxon of DPP vs.\ MMR at $k=3$: all $p > 0.25$ (statistically tied). DPP significantly beats random on ROUGE-L ($+0.008$, $p = 0.024$). All content-aware methods lose to Lead-3 by $\sim 0.8$ ROUGE-L points, consistent with the well-documented lead bias in news datasets \cite{narayan-etal-2018-xsum}.

\paragraph{Interpretation.} DPP's empirical advantage over MMR \emph{appears at the variationally-selected size}, not at an arbitrary fixed $k$. When forced to $k=3$ both methods collapse to statistical parity; at the self-determined size they diverge in DPP's favor. This indicates that the parameter-free size selection is itself part of the DPP form's empirical contribution---not an incidental convenience---and is consistent with Theorem~\ref{thm:dpp-breadth}(iv): the log-det objective encodes a balance between $|S|$ and pairwise similarity $c$ whose optimum cannot be reconstructed by fixing $|S|$ externally.

\end{document}